\documentclass{article}

\usepackage{arxiv}

\usepackage[utf8]{inputenc} 
\usepackage[T1]{fontenc}    
\usepackage{hyperref}       
\usepackage{url}            
\usepackage{booktabs}       
\usepackage{amsfonts}       
\usepackage{nicefrac}       
\usepackage{microtype}      
\usepackage{lipsum}

\usepackage{cite}
\usepackage{amsmath,amssymb,amsfonts}
\usepackage{graphicx}
\usepackage{textcomp}
\usepackage{xcolor}
\usepackage{xspace}
\usepackage{tikz}
\usepackage{pgfplots}
\usetikzlibrary{arrows,backgrounds,decorations,decorations.pathmorphing,positioning,fit,automata,shapes,snakes,patterns,plotmarks,calc}
\usepackage{subfig}
\usepackage[vlined,ruled,linesnumbered]{algorithm2e}
\usepackage{stmaryrd}
\usepackage[inline]{enumitem}
\usepackage{booktabs}
\usepackage{mathtools,amssymb,amsthm,enumitem}

\newtheorem{definition}{Definition}[section]
\newtheorem{example}{Example}[section]
\newtheorem{lemma}{Lemma}[section]
\newtheorem{remark}{Remark}
\newtheorem{theorem}{Theorem}


\newcommand{\f}{\varphi}
\newcommand{\STL}{STL\xspace}

\newcommand{\PSTL}{PSTL\xspace}
\newcommand{\Intvl}{I}
\newcommand{\sPara}{c}

\newcommand{\val}{\nu}
\newcommand{\vx}{\mathbf{x}}
\newcommand{\vy}{\mathbf{y}}
\newcommand{\x}{\mathbf{x}}
\newcommand{\G}{\mathbf{G}}
\newcommand{\F}{\mathbf{F}}
\newcommand{\alw}{\mathbf{G}}

\newcommand{\U}{\mathbf{U}}
\newcommand{\true}{\mathit{true}}

\newcommand{\aand}{\,\wedge\,}

\newcommand{\Reals}{\mathbb{R}}

\newcommand{\PosReals}{\mathbb{R}^{\ge 0}}
\newcommand{\domain}{\mathcal{D}}
\newcommand{\setof}[1]{\left\{#1\right\}}

\newcommand{\params}{\mathcal{P}}
\newcommand{\timedomain}{T}

\newcommand{\paramspace}{{\domain_\params}}

\newcommand{\validitydomain}[1]{\mathcal{V}(#1)}

\newcommand{\mypara}[1]{\vspace{0.3em} \noindent {\bf #1.\ }}



\newcommand{\assign}{\leftarrow}

\newcommand{\oreq}{\varphi_{\mathrm{out}}}
\newcommand{\ireq}{\varphi_{\mathrm{in}}}

\newcommand{\support}{\mathsf{supp}}

\newcommand{\good}{\mathrm{good}}
\newcommand{\bad}{\mathrm{bad}}
\newcommand{\cex}{\mathrm{cex}}
\newcommand{\candidate}{\mathrm{proposed}}

\newcommand{\traces}{\mathcal{T}}

\newcommand{\featureValues}{\mu}
\newcommand{\labels}{\ell}
\newcommand{\simulink}{Simulink\textsuperscript{\textregistered}}

\newcommand{\vu}{\mathbf{u}}
\newcommand{\y}{\mathbf{y}}
\newcommand{\param}{\mathbf{p}}


\title{Mining Environment Assumptions for Cyber-Physical System Models}

\author{
  Sara Mohammadinejad\\
  University of Southern California\\
  \texttt{saramoha@usc.edu} \\
   \And
  Jyotirmoy V. Deshmukh\\
  University of Southern California\\
  \texttt{jdeshmuk@usc.edu} \\
     \And
  Aniruddh G. Puranic\\
  University of Southern California\\
  \texttt{puranic@usc.edu} \\
}
\begin{document}
\maketitle

\begin{abstract}
Many complex cyber-physical systems can be modeled as heterogeneous
components interacting with each other in real-time. We assume that
the correctness of each component can be specified as a requirement
satisfied by the output signals produced by the component, and that
such an {\em output guarantee} is expressed in a real-time temporal
logic such as Signal Temporal Logic (STL).  In this paper, we
hypothesize that a large subset of {\em input signals} for which the
corresponding output signals satisfy the output requirement can also
be compactly described using an STL formula that we call the {\em
environment assumption}.  We propose an algorithm to mine such an
environment assumption using a supervised learning technique.
Essentially, our algorithm treats the environment assumption as a
classifier that labels input signals as {\em good} if the
corresponding output signal satisfies the output requirement, and as
{\em bad} otherwise. Our learning method simultaneously learns the
structure of the STL formula as well as the values of the numeric
constants appearing in the formula.\footnote{If the structure or template of the STL formula is given based on user-defined domain knowledge, learning the parameters of the template is trivial, and our method is able to do that. The seminal works in \cite{vazquez2017logical, vazquez2018time, jin2015mining} focus on learning the values of parameters for a user-defined template PSTL formula.} To achieve this, we combine a
procedure to systematically enumerate candidate Parametric STL (PSTL)
formulas, with a decision-tree based approach to learn parameter
values. We demonstrate experimental results on real world data from
several domains including transportation and health care.
\end{abstract}

\section{Introduction}
Autonomous cyber-physical systems such as self-driving cars, unmanned
aerial vehicles, general purpose robots, and medical devices can often
be modeled as a system consisting of heterogeneous components. Each of
these components could itself be quite complex: for example, a
component could contain design elements such as a model predictive
controller, a deep neural network, rule-based control,
high-dimensional lookup tables to identify operating regime, etc.
Understanding the high-level behavior of such components at an
abstract, behavioral level is thus a significant challenge.  The
complexity of individual components makes compositional reasoning
about global properties a difficult task.  Contract-based reasoning
\cite{nuzzo2014relational,li2017stochastic} is a potential approach
for compositional reasoning of such complex component-based CPS
models. Here, a design component $C$ is modeled in terms of {\em
environment assumptions}, i.e., assumptions on the timed input traces
to $C$, and {\em output guarantees}, i.e.  properties satisfied by the
corresponding model outputs.  A big challenge is that designers do not
often articulate such assumptions and guarantees using logical,
machine-checkable formalisms \cite{yamaguchi2016combining}. 

Recently, there is considerable momentum to express formal
requirements of design components using real-time temporal logics such
as Signal Temporal Logic (STL)
\cite{jin2014powertrain,RoehmEtAl2015arch,HoxhaAF14arch1,kapinski2016st,bartocci2013learning,cameron2015towards}.
Typical STL requirements express families of excitation patterns on
the model inputs or designer-specified pre-conditions that guarantee
desirable behavior of the model outputs \cite{ferrere2019interface}.
In this paper, we consider the dual problem: {\em Given an output
requirement $\oreq$, what are the assumptions on the model
environment, i.e., input traces to the model, that guarantee that the
corresponding output traces satisfy $\oreq$?} Drawing on the
terminology from \cite{jin2015mining,hoxha2018mining}, we call this
problem the {\em assumption mining} problem. 

We propose an approach that reduces the assumption mining problem to
{\em supervised learning}. We assume that input traces can be assigned
labels {\em desirable} and {\em undesirable} based on whether the
corresponding output traces satisfy or violate $\oreq$ respectively.
A potential approach is to then use off-the-shelf supervised learning
methods for time-series data from the machine learning (ML) community.
However, such techniques typically train discriminators in high
dimensional spaces which may not be human-interpretable
\cite{jones2014anomaly}.  Interpretability is an important factor for
safety-critical applications as components are usually developed by
independent design teams, and articulating the assumptions and
guarantees in an interpretable format can reduce downstream bugs
introduced during system integration. 

In this paper, we assume that environment assumptions can be expressed
in STL. The use of STL to express such assumptions has been explored
before in \cite{ferrere2019interface,jin2014powertrain}. However,
there is no existing work on automatically inferring such assumptions
from component models. The primary contribution of this paper is a new
algorithm to mine environment assumptions (expressed in STL). Our
counterexample-guided inductive synthesis algorithm systematically
enumerates parametric STL (\PSTL) formulas, and attempts to find
parameter valuations such that the resulting formula $\ireq$
classifies the given labeled input traces with high accuracy. This
step of our algorithm uses a decision tree based algorithm for
learning the parameter valuations for a \PSTL formula that lead to
good classification accuracy. Our choice of the feature space for the
decision tree classifier allows us to extract an STL formula from the
decision tree itself. In the next step, we make use of a falsification
procedure to check if there exists an input trace to the model that
satisfies $\ireq$ but the corresponding output does not satisfy
$\oreq$. If such a trace exists, we resume the enumerative search for
an accurate STL-based classifier. 

To summarize, our key contributions are as follows:
\begin{itemize}
\item We propose a new algorithm to mine environment assumptions (expressed in STL) automatically. 
\item As our algorithm systematically increases the syntactic complexity of the \PSTL formulas, it uses the Occam's Razor principle to learn environment assumptions, i.e., it attempts to learn STL classifiers that are short, and hence simple and more interpretable\footnote{We prevent excessive generalization and simplification by assuming a threshold on the accuracy of the learned STL formula.}. 
\item We demonstrate the capability of our assumption mining algorithm on a few benchmark models.
\end{itemize}

\section{Preliminaries}
\begin{definition}[Timed Traces]
A timed trace defines a function from a time domain $\timedomain$
(which is a finite or infinite collection of ordered time instants) to
a non-empty set $\domain$ equipped with a distance metric. 
\end{definition}

In this paper, we restrict our attention to discrete timed traces,
where $\timedomain$ is essentially a finite subset of $\PosReals$ that
includes $0$, and $\domain$ is assumed to be some subset of
$\Reals^n$. A {\em trace variable} or a {\em signal} $\x$ is a
variable that evaluates to timed traces. We abuse notation and use
$\x(t)$ to denote the valuation of the trace variable $\x$ at time
$t$. The time domain associated with the trace variable $\x$ is
denoted by $\timedomain(\x)$.  We remark that the bold-face upright
$\x$ denotes a multi-dimensional signal, i.e. $\x = (x_1,\ldots,x_k)$,
where each $x_i$ is single-dimensional (i.e. their domain is a subset
of $\Reals$).  The dimension of $\x$ is $k$.  Next, we define the
notion of a dynamical model of a CPS component.

\begin{definition}[Dynamical Models of a CPS component]
A dynamical model $M_C$ of a CPS component $C$ is defined as a machine
containing a set of input signals (i.e. input trace variables) $\vu$,
output signals $\vy$, and state signals $\vx$. We assume that the
domains of $\vu$, $\vy$ and $\vx$ are $U$, $Y$ and $X$ respectively.
Let $\vx(0)$ denote the initial valuation of the state variables. The
dynamical model $M_C$ takes an input trace $\vu(t)$, an initial state
valuation $\vx(0)$ and produces an output trace $\vy(t)$, denoted as
$\vy(t) = M_C(\vu(t), \vx(0))$.
\end{definition}

We note that typically, there may be a state trace $\vx(t)$ denotes a
system trajectory that evolves according to certain dynamical
equations that depend on $\vx(\tau)$ for $\tau < t$ and $\vu(t)$.
Further, $\vy(t)$ is usually a function of $\vx(t)$ and $\vu(t)$.
However, for the purpose of this paper, we are only concerned with the
input/output behavior of $C$, and do not explicitly reason over
$\vx(t)$. We also assume that the initial valuation for the state
variables is fixed\footnote{This is not limiting as we can simply have
an input variable that is used to set an initial valuation for
$\vx(t)$ at time $0$ and is ignored for all future time points.}.
Further, if the component $C$ under test is obvious from the context,
we drop the subscript. Thus, we can simply state that $\vy(t) =
M(\vu(t))$ to denote the simplified view that the model $M$ is a
function over traces that maps input traces to output traces.

\mypara{Signal Temporal Logic (STL)} Signal Temporal Logic
\cite{maler2004monitoring} is a popular formalism for expressing
properties of real-valued signals. The simplest STL formulas are
atomic predicates over signals, that can be formulated as $f(\x) \sim
\sPara$, where $f$ is a function from $\domain$ to $\Reals$, $\x$ is a
signal, $\sim \in \setof{\geq, \leq,=}$,  and $\sPara \in \Reals$.
Logical and temporal operators are used to recursively build STL
formulas from atomic predicates and subformulas.  Logical operators
are Boolean operations such as $\neg$ (negation), $\wedge$
(conjunction), $\vee$ (disjunction), and $\implies$ (implication).
Temporal operators $\G$ (always), $\F$ (eventually) and $\U$ (until)
help express temporal properties over traces. Each temporal operator
is indexed by an interval $\Intvl := (a,b) \mid (a,b] \mid [a,b) \mid
[a,b]$, where $a, b \in \timedomain$.  Let $\sPara \in \Reals$, and
$\x$ be a signal, then \eqref{eq:stl_syntax} gives the syntax of STL.
\begin{equation}
\label{eq:stl_syntax}
\begin{array}{l}
\f := \true 
      \mid f(\x) \sim \sPara 
      \mid \neg\f 
      \mid \f_{1} \wedge \f_{2}\, 
      \mid \G \f\, 
      \mid \F\f\, 
      \mid \f_{1}\, \U_{\Intvl}\, \f_{2}~
\end{array}
\end{equation}

\begin{definition}[Support Variables of a Formula]
Given an STL formula $\varphi$, the support variables of $\varphi$ is the set of signals appearing in atomic predicates in any
subformula. We denote support of $\varphi$ by $\support(\varphi)$.
\end{definition}

The semantics of \STL can be defined in terms of the Boolean
satisfaction of a formula by a timed trace, or in terms of a function
that maps an STL formula and a timed trace to a numeric value known as
the {\em robustness value}.  If a trace $\x(t)$ satisfies a formula
$\varphi$, then we denote this relation as $\x(t) \models \varphi$.
We briefly review the {\em quantitative semantics} of STL from
\cite{donze2010robustness}, as we use it extensively in this
paper.

Formally, the robustness value approximates the signed distance of a
trace from the set of traces that marginally satisfy or violate the given
formula.  Technically, in \cite{donze2010robustness} the authors
define a robustness signal $\rho$ that maps an STL formula $\varphi$
and a trace $\x$ to a number at each time $t$ that denotes an
approximation of the signed distance of the suffix of $\x$ starting at
time $t$ w.r.t. traces satisfying or violating $\varphi$.  The
convention is to call the value at time $0$ of the robustness signal
of the top-level STL formula as the robustness value.  This definition has the property that if a trace
has positive robustness value then it satisfies the top-level formula,
and violates the formula if it has a negative robustness value. 
\begin{equation*}
\label{eq:stl_q_semantics}
\begin{array}{rcl}
\rho(f(\x) \ge \sPara,\x,t) & = & f(\x(t))-\sPara \\
\rho(\neg\varphi,\x,t) & = & -\rho(\varphi,\x,t)  \\
\rho(\varphi_1\aand\varphi_2,\x,t) & = & \min( \rho(\varphi_1,\x,t), \rho(\varphi_2,\x,t))  \\
\rho(\G_{\Intvl} \varphi,\x,t) & = & \inf_{t^\prime \in t\oplus \Intvl} \left(\rho(\varphi,\x,t^\prime)\right) \\
\rho(\F_{{\Intvl}} \varphi,\x,t) & = & \sup_{t^\prime \in t\oplus \Intvl} \left(\rho(\varphi,\x,t^\prime)\right) \\
\rho(\varphi_1\U_{\Intvl}\varphi_2,\x,t) & = & \sup\limits_{t^{\prime}\in t \oplus \Intvl} 
    \min \begin{pmatrix}\rho(\varphi_2,\x,t^{\prime}),\\ 
                        \inf_{t^{\prime\prime}\in [t,t^\prime) } \rho(\varphi_1,\x,t^{\prime\prime})
         \end{pmatrix}.

\end{array}
\end{equation*}

\noindent In the above, $\oplus$ denotes the Minkowski sum, i.e., $t
\oplus [a,b] = [t+a,t+b]$. Note that we only include the atomic
predicate of the form $f(\vx) \ge \sPara$, as any other atomic signal
predicate can be expressed using predicates of this form, negations
and conjunctions.

\begin{example}
Consider the signal $\x$, and the STL formulas $\f_1 = \G_{[0,10)} (x
\leq 3)$ and $\f_2 = \F_{[0, 10]} (x < -3)$. Consider a timed trace of
$\x$, where $\x(t) = \sin(2\pi t)$ (for some discrete set of time
instants $t \in [0,50]$).  This trace satisfies $\f_1$ because $\sin
(2 \pi t)$ never exceeds $3$ and violates $\f_2$ since $\sin (2 \pi t)
\ge -3$  for all $t$.  The robustness value of $\f_1$ with respect to
$\x(t)$ is the minimum of $3-\x(t)$ over $[0; 10)$, or $2$.  The
robustness value of $\f_2$ with respect to $\x(t)$ is  the maximum of
$-3-\x(t)$ over $[0,10]$ or $-2$ (see Fig.~\ref{fig:ro_discrete}).
\end{example}

\begin{figure}[!t]
\centering
\includegraphics[scale=0.15]{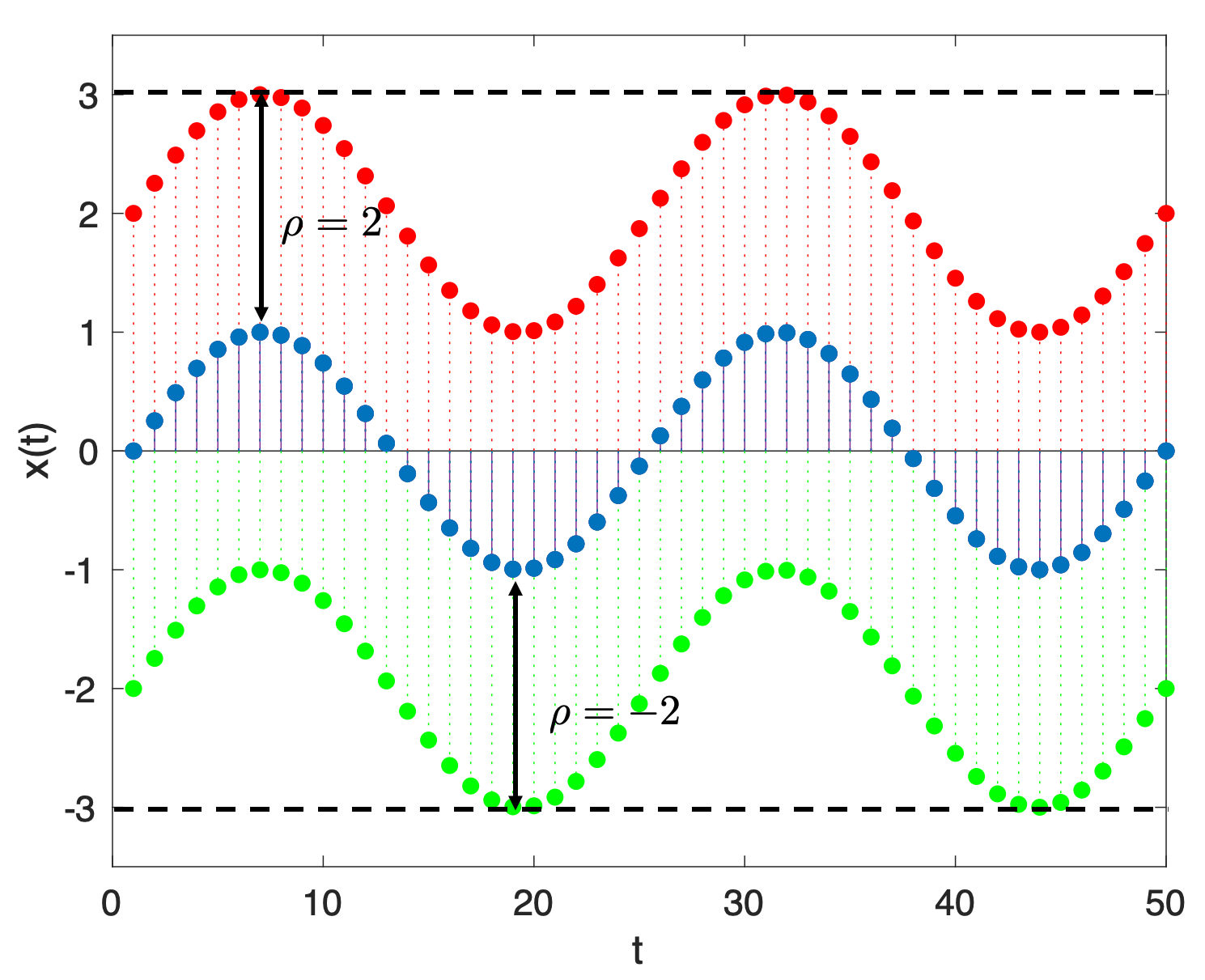}
\caption{The approximate signed distance between $\x(t)$ (blue
sinusoidal trace) and traces marginally violating $\f_1$ (i.e. the red trace) is $2$ (i.e. $\rho( \f_1,x,0) = 2$), and the approximate signed
distance between $\x(t)$ and traces marginally satisfying $\f_2$ (i.e. the green trace) is $-2$ (i.e. $\rho( \f_2,x,0) = -2$).
\label{fig:ro_discrete}}
\end{figure}

\mypara{Parametric Signal Temporal Logic (PSTL)} Parametric \STL
(\PSTL) \cite{asarin2011parametric} is an extension of \STL  where
constants appearing in atomic predicates or time intervals are
replaced by parameters.  \PSTL assumes a finite supply of parameter
variables $\params$, which come from two distinct sets: value-domain
parameter variables $\params_V$ and time-domain parameter variables
$\params_T$. We assume that the parameters in $\params_V$ can take
values in the set $V$, and those in $\params_T$ can take values in
$T$. Then, the parameter space of the \PSTL formula is $V \times
T$.
For any \PSTL formula, we associate a valuation function $\val$ that
maps parameter variables to some value in the parameter space.
Essentially, the valuation function has the effect of mapping a \PSTL
formula to an \STL formula with a specific valuation for the
parameters.

\begin{example} The property ``Always for the first $\tau$ seconds,
the trace $\x(t)$ is greater than some value $c_1$ and the signal
$\y(t)$ is less than $c_2$'' is written in \PSTL as: 
\[
\f(c_1,c_2,\tau) = \G_{[0,\tau]} (x > c_1 \aand y< c_2).
\]
In the above, $c_1$ and $c_2$ are value-domain parameter variables,
and $\tau$ is a time parameter variable.  Let $c_1 \in [1,2]$, $c_2
\in [0,3)$ and $\tau \in [0,10]$, the parameter space of $\f$ is
$[1,2] \times [0,3) \times[0,10]$.  The \STL formula $\G_{[0,6]} (x >
1.7 \aand y< 2)$ is obtained with the valuation $\val = \setof{\tau
\mapsto 6, c_1 \mapsto 1.7, c_2 \mapsto 2}$ applied to $\f$.
\end{example}

\subsection{Requirements and Assumptions}
In this section, we formalize the notion of output requirements and
input or environment assumptions.  

\begin{definition}[Output requirement]
Output requirement or $\oreq$ is an \STL formula that is satisfied by output traces of the system if their behavior is desirable and is not satisfied otherwise.
\end{definition}

\begin{definition}[Environment Assumption]
Given a dynamical component model $M_C$ = $(\vu,\vx,\vy)$, an output
requirement is an STL formula $\oreq$, where $\support(\oreq) = \vy$.
Given an output requirement $\oreq$, an STL formula $\ireq$ is called
an environment assumption if:
\begin{enumerate}
\item
$\support(\ireq) = \vu$, 
\item
$\forall \vu(t): \left(\vu(t) \models \ireq\right) \implies \left(M(\vu(t))
\models \oreq\right)$.
\end{enumerate}
\end{definition}

Essentially, an environment assumption is an STL property on the
input traces to the model that guarantees that the corresponding
output traces satisfy the output requirement $\oreq$.

\begin{example}
Consider a simple model $M$ that simply delays a given input signal
by $1$ second, i.e. the value of the output at time
$1$ is the value of the input signal at time $0$ (and the values of
the output in times $[0,1)$ are defined as some default output trace
value). Suppose the output requirement is $\G_{[1,100]} (\vy > 0)$, then
the property $\G_{[0,99]} (\vu > 0)$ is a valid environment assumption
for the model.
\end{example}

In software verification parlance, the environment assumption could be
viewed as a pre-condition over the input trace to the model that
guarantees an assertion on the output trace.

\section{Environment Assumption Mining}
In this section, we describe our overall approach to mine environment assumptions, and identify sub-problems that we will
address in subsequent sections. The central idea in our approach is a
counterexample-guided inductive synthesis (CEGIS) algorithm to mine
environment assumptions. The key steps of this process are shown in
Algorithm~\ref{algo:cegis_ea}.
\begin{algorithm}
\DontPrintSemicolon 
\KwIn{Input signal domain $U$, Output requirement $\oreq$, Input
signal time domain $\timedomain(\vu)$, Model $M = (\vu,\vy)$,
Simulation Budget $N$ for Falsification, Formula length limit
$\ell_\mathrm{max}$, Classification Accuracy $1-\epsilon$}
\KwOut{Environment Assumption $\ireq$}
$\traces$ = Sample input traces from $U$ using time instants from
$\timedomain(\vu)$ \nllabel{algoline:sample} \;
\ForEach{$\vu(t) \in \traces$}{
     \lIf{$M(\vu(t)) \models \oreq$}{ 
         $\traces_{\good}$ = $\traces_{\good} \cup \{\vu(t)\}$
         \nllabel{algoline:good}
     } \lElse {
         $\traces_{\bad}$ = $\traces_{\bad} \cup \{\vu(t)\}$ 
         \nllabel{algoline:bad}
     }
}
$\psi^\candidate$ = $\mathsf{EnumerateNextPSTL}()$ \nllabel{algoline:getNew1}\;
\While{$|\psi^\candidate| < \ell_\mathrm{max}$}{
     (accuracy, $\Upsilon[\psi^\candidate]$) =
     \ \ $\mathsf{DecisionTreeBasedSTLClassifier}(\psi^\candidate, \traces_{\good}, \traces_{\bad})$ 
     \nllabel{algoline:getdt} \;
     $\ireq^\candidate = \mathsf{GetSTL}(\Upsilon[\psi^\candidate])$ 
     \nllabel{algoline:getIreq} \;
     \If{accuracy $> 1-\epsilon$}{ \nllabel{algoline:accuracy}
         $\cex(t)$ = $\mathsf{Falsify}(\vy \models \oreq, N)$ \\
                \quad subject to $\vu(t) \models \ireq^\candidate$ \\
                    \quad \qquad \qquad\ $\vy(t) = M(\vu(t))$
                    \nllabel{algoline:getcex} \;
         \lIf{$\cex(t) \neq \emptyset$}{
             $\traces_{\bad}$ = $\traces_{\bad} \cup \{\cex(t)\}$
             \nllabel{algoline:add}
         } \lElse {
             \Return $\ireq^\candidate$ \nllabel{algoline:return}
         }
    } \Else {
         $\psi^\candidate$ = $\mathsf{EnumerateNextPSTL}()$
         \nllabel{algoline:getNew2}
     }
}
\caption{Environment Assumption Mining Algorithm\label{algo:cegis_ea}}
\end{algorithm}

We assume that the user provides us a description of the input signal
domain $U$ (i.e. upper and lower bounds on the values appearing in the
input traces), as well as a set of time instants on which input traces
are expected to be defined (i.e. $\timedomain(\vu)$).  Initially, we
randomly sample input traces (Line~\ref{algoline:sample} and label
them as good or bad (resp.
Lines~\ref{algoline:good},\ref{algoline:bad}) depending on whether
their corresponding outputs satisfy the given $\oreq$. At the
beginning of the while-loop, we assume that there is a PSTL formula
$\psi^\candidate$ that is being considered as a candidate environment
assumption.  The first time the loop body is executed, this
enumeration occurs in Line~\ref{algoline:getNew1}, otherwise a new
\PSTL formula is obtained in the loop in Line~\ref{algoline:getNew2}.
Once we have a candidate \PSTL formula $\psi^\candidate$, we use
an off-the-shelf supervised learning approach to obtain a decision
tree $\Upsilon[\psi^\candidate]$ from $\psi^\candidate$ using a
procedure discussed in Sec.~\ref{sec:dt}. We use a procedure described
in Sec.~\ref{sec:dt2stl} to obtain an interpretable \STL formula
$\ireq^\candidate$ from $\Upsilon[\psi^\candidate]$.  If
$\ireq^\candidate$ does not give a high classification accuracy for
the given set of good/bad traces\footnote{Initially, it is possible
that we do not get any bad traces by random sampling.  In this case,
we can replace the decision tree based classifier by a procedure that
infers tight parameter valuations from only the positive examples
using approaches such as \cite{asarin2011parametric,jin2015mining}. A
potential drawback is that we may learn an environment assumption that
is narrowly applicable only to the good traces and does not generalize
well.},   we move to the next \PSTL formula to be enumerated till we
reach a user-defined upper bound on the maximum formula length. If we
exceed this bound, our procedure fails to find an accurate environment
assumption.

We note that it is possible that the candidate formula
$\ireq^\candidate$ while being accurate in classifying the set of
traces in $\traces_\good$ and $\traces_\bad$ is too permissive. This
means that it may allow for input traces not present in
$\traces_\good$ for which the corresponding output traces do not
satisfy $\oreq$. We wish to constrain the environment assumption to
exclude such signals. Thus, we invoke an off-the-shelf falsification
technique using the $\mathsf{Falsify}$ function to {\em refine} the
synthesized environment assumption. There are many promising
falsification tools such as
\cite{staliro,deshmukh_stochastic_2015,donze2010breach} that our technique
could use. The falsifier uses a global optimizer to identify an input
trace $\vu(t)$ satisfying $\ireq^\candidate$ for which $M(\vu(t))$
$\not\models \oreq$ (Line~\ref{algoline:accuracy}). Typical falsifiers
parameterize the input trace using a finite number of {\em control
points}, i.e., time points at which the signal value is deemed to be
an optimization variable. At all other time points, the intermediate
signal values are obtained through a user-specified interpolation
scheme. Let $\hat{\vu}$ denote the control point vector used by the
falsifier to generate the input trace $\vu(t)$. Then, consider an
optimizer that tries to {\em minimize} the following cost function: 
\scalebox{0.95}{
\begin{minipage}{\linewidth}

\[
\mathrm{cost}(\hat{\vu}) = (\max(0, -\rho(\ireq^\candidate,\vu,0))+1)^{2k}-1
+ \rho(\oreq,\vy,0) 
\] 

\end{minipage}}
\vspace*{7pt}

\noindent Essentially, this cost function represents a quantity that
is highly positive if the input trace does not satisfy
$\ireq^\candidate$, thus favoring input control point vectors leading
to traces that satisfy $\ireq^\candidate$. The constant $k$ is a
positive integer chosen to overpower the maximum negative robustness
that can result from the output trace $\vy(t)$ not satisfying $\oreq$.
If the input does satisfy $\ireq^\candidate$, the first term is simply
$0$, and we only look for outputs that violate $\oreq$.

If such an input trace is found, we add it to the list of bad traces
(Line~\ref{algoline:add}), and restart the enumerative solver from the
last formula that it had enumerated (Line~\ref{algoline:add}). If there
is no counterexample found, the algorithm terminates with an STL
formula representing the environment assumption. Note that our
algorithm automatically learns the structure of the environment
assumption as well as the parameter values. In the following sections,
we will explain the procedure for the decision tree based learning of
the classifier.
\begin{remark}
A key step in Algorithm~\ref{algo:cegis_ea} is systematic enumeration
of \PSTL formulas. This procedure covers the space of all \PSTL formulas. We omit the details of how this is performed, but
in essence, the procedure closely mimics the work in
\cite{mohammadinejad2019interpretable}. Longer formulas are
constructed from smaller formulas in a systematic fashion by defining
a canonical order in which \STL operators are used, and certain
efficiency improvements are added to avoid enumerating semantically
equivalent formulas with different syntax trees. More details about  systematic enumeration is provided in Appendix.
\end{remark}

\section{Supervised Learning of STL classifiers}
In this section we explain our decision-tree based algorithm for
learning the parameter valuations of a \PSTL formula that yields an
accurate \STL classifier. Before delving into the details of our procedure, we recall some related work on supervised learning of STL
formulas from data. In \cite{mohammadinejad2019interpretable}, the
authors consider a technique that enumerates {\em monotonic} \PSTL
templates and then uses the {\em validity domain boundary} of the
\PSTL formula to classify traces.

\begin{definition}[Monotonic \PSTL]
Consider a \PSTL formula $\varphi(\param)$ where $\param =
(p_1,\ldots,p_m)$.  Let $\val(\param)$ and $\val'(\param)$ be two
valuations which assign identical values to all parameters except
$p_i$. The formula $\varphi(\param)$ is called monotonically
increasing in $p_i$ if for all traces $\x(t)$, if $\x(t) \models
\varphi(\val(p_i))$ and $\val(p_i) < \val'(p_i)$, then it implies that
$\x(t) \models \varphi(\val'(p_i))$. A monotonically decreasing \PSTL
formula can be defined analogously. A \PSTL formula is called
monotonic in a parameter $p_i$ if it is either monotonically
increasing or decreasing, and is called monotonic if it is monotonic
in each of its parameters.
\end{definition}

\begin{example}
The formula $\G_{[0,10]}(\x < c)$ is monotonically increasing in $c$,
because once it is true for a given trace for some value $\val(c)$ of
$c$, it will be true for all values greater than $\val(c)$.
\end{example}

\begin{definition}[Validity Domain, Validity Domain Boundary]
The validity domain $\validitydomain \varphi$ is an open subset of
$\paramspace$ s.t.: $\forall \val(\param) \in
\validitydomain \varphi$, and for all traces $\x(t)$, $\x(t) \models
\varphi(\val(\param))$.  The boundary of the validity domain is the
set difference between the closure of the validity domain and its
interior.
\end{definition}

\begin{example}
Consider a set of traces $\x(t)$ that are all bounded above by $1$,
then for the formula $\G_{[0,10]} (\x < c)$, the validity domain is the
set $(1,\infty)$, and the validity domain boundary is the single point
$c=1$.
\end{example}

In general, computing even the validity domain boundary of a \PSTL
formula where the atomic predicates are linear inequalities of the
signals requires reasoning over semi-linear sets
\cite{asarin2011parametric}. Thus, in \cite{maler2017learning}, the
authors have proposed a multi-dimensional binary search algorithm to
approximate the validity domain boundary. In
\cite{mohammadinejad2019interpretable}, the authors propose combining
the algorithm from \cite{maler2017learning} with a supervised learning
procedure. Essentially, each step in \cite{maler2017learning}
identifies a set of points in the parameter space that lie on the
validity domain boundary. In \cite{mohammadinejad2019interpretable},
the authors propose using each successive set of points discovered by
the algorithm to define a classifier. The procedure terminates when a
sufficiently high accuracy classification is obtained. A key
limitation of this approach is that it only works for monotonic \PSTL
formulas, and when the number of parameters is high, computing the
validity domain boundary can be time-consuming.

Instead, in this paper, we consider an approach based on sampling the
parameter space, obtaining robustness values for a given set of
``seed'' traces at each of the sampled points, and using these values
as features in a decision-tree based classification algorithm. We now
explain each of these steps in detail.

\subsection{Decision Tree based Supervised Learning}

Decision trees are a non-parametric supervised learning method used
for classification and regression.  Learned trees can also be
represented as sets of if-then else rules which are understandable by
humans.  The depth of a decision tree is the length of the longest
path from the root to the leaf nodes, and the size of a decision tree
is the number of nodes in the tree.  A binary decision tree is a tree that every non-terminal node has at most two
children.  Decision trees represent a disjunction of conjunctions of
constraints represented by nodes in the tree.  Each path from the tree
root to a leaf corresponds to a conjunction of constraints while the
tree itself is a disjunction of these conjunctions
\cite{Mitchell:1997:ML:541177}.

%

While decision trees improve human readability
\cite{Mitchell:1997:ML:541177}, they are not specialized in learning
temporal properties of timed traces. A na\"{i}ve application of a
decision tree to timed traces would treat every time instant in the
trace as a decision variable, leading to deep trees that lose
interpretability.  

\begin{example}
We applied decision trees on a 2-dimensional
synthetic data set. The data set consists of two sets of traces
corresponding to signals $x$ and $y$. In both sets $y(t) = x(t-d)$,
which $d$ represents the delay between $x$ and $y$. For label $1$
traces $d<20$, and for label $0$ traces $d> 30$. Each node in decision
tree corresponds a point of $x$ and $y$ signals in time. Decision
trees failed to classify the data set properly since the resulting
tree has $179$ nodes, and the accuracy of training is $50\%$, which is
the accuracy of random classification. On the other hand, this data
set can be easily classified using \STL formula $ \varphi =
G_{[0,100]} (x(t) \geq 0.1 \implies F_{[0,20)} (y(t) \geq 0.1))$.  A
na\"{i}ve use of decision trees thus does not provide the same dynamic
richness as many temporal logic formulas.
\end{example}

Feature selection in decision trees is challenging; in our work, we
use robustness values of a given \PSTL formula at different parameter
valuations as features. For a \PSTL formula containing only one
parameter this is unnecessary, as we can simply determine the validity
domain boundary (corresponding to $0$ robustness value) by a simple
binary search. However, for \PSTL formulas with multiple independent
parameters, random samples of the parameter space can be informative
about the validity domain boundary and hence serve as features for our
decision-tree based learning algorithm. Formally,
Algorithm~\ref{alg:logical_classification} assumes that we are given
sets of traces $\traces_\good$ and $\traces_\bad$, a \PSTL formula
$\psi(\param)$ (with the parameter space $\paramspace$\footnote{$\paramspace$ is computed using upper and lower bounds on the values appearing in the input traces (e.g. in Fig.~\ref{fig:naval}, for time instances = $[0,60]$, $\paramspace = [0,60] \times [0,80]\times[15,45]$). }).
The algorithm returns the classification accuracy and the decision
tree produced by an off-the-shelf decision tree learning algorithm.
\newcommand{\train}{\mathrm{train}}
\newcommand{\test}{\mathrm{test}}
\begin{algorithm}[!t]
\small
\SetKwProg{Fn}{Function}{:}{}
\KwIn{$\psi, \traces_{\good}, \traces_{\bad}$}
\KwOut{accuracy, $\Upsilon[\psi]$}
\Fn{$\mathsf{DecisionTreeBasedSTLClassifier}(\psi, \traces_{\good}, \traces_{\bad})$}{
    \tcp{Split data for train and test}
    $\traces_\train, \traces_\test$ $\assign$ $\mathsf{split}(\traces_{\good} \cup \traces_{\bad}, 0.7)$ 
    \nllabel{algoline:split} \; 

    \tcp{Compute robustness values as features for training}
    $\featureValues_\train$ $\assign$ $\mathsf{computeFeatures}(\traces_\train,\psi,\paramspace(\psi))$ 
    \nllabel{algoline:features} \;
    \ForEach{$\vu(t) \in (\traces_\test\cup\traces_\train)$}{ 
        $\labels(\vu(t))$ $\assign$ $(\vu(t) \in \traces_{\good})$ \nllabel{algoline:label} 
    }
	    
    \tcp{Train decision tree using computed features}
    $\Upsilon[\psi^\candidate]$ $\assign$ $\mathsf{TrainDecisionTree}(\featureValues, \labels(\featureValues))$ 
    \nllabel{algoline:dt}\;
    
    \tcp{Compute accuracy}
    \ForEach{$\vu(t) \in \traces_\test$}{
        $\featureValues_\test(\vu(t))$ $\assign$ $\mathsf{computeFeatures}(\traces_\test,\psi,\paramspace(\psi))$ \;
        $\labels'(\vu(t))$ $\assign$ $\Upsilon[\psi^\candidate](\featureValues_\test(\vu(t)))$ \nllabel{algoline:newlabels}
    }
    accuracy $\assign$ $\mathsf{computeAccuracy}(\labels,\labels')$ \nllabel{algoline:accuracy} \;
    \Return accuracy, $\Upsilon[\psi]$ \;
}
\Fn{$\mathsf{computeFeatures}(\traces,\psi,\paramspace(\psi))$}{
    \tcp{Sample $m$ parameter values}
    $\paramspace_m \leftarrow \mathsf{gridSample}(\paramspace,m)$ \nllabel{algoline:gridsample} \;
    \ForEach{$\vu(t) \in \traces$}{
        \For{$i \in [1, m]$}{
            $\psi_i$ $\assign$ $\psi(\val(\paramspace_m(i)))$ \nllabel{algoline:newpsi} \;
            $\featureValues(\vu(t))[i]$ $\assign$ $\rho(\psi_i, \vu, 0)$ \nllabel{algoline:assignfeature};
        }
    }
    \Return $\featureValues$ \;
}
\caption{Classification using decision trees \label{alg:logical_classification}}
\end{algorithm}

In Line~\ref{algoline:split}, we split the given set of traces into
training and test sets; $0.7$ is an arbitrary heuristic indicating the
ratio of the size of the training set to the total number of traces.
In Line~\ref{algoline:features}, we invoke the function
$\mathsf{computeFeatures}$.  Essentially, this function maps each
trace $\vu(t)$ in the set $\traces_\train$ to a $m$-element feature
vector $\featureValues(\vu(t))$.  To produce this vector, we obtain
$m$ samples of the parameter space along a user-defined
grid\footnote{In principle, we can use $m$ random samples of the
parameter space $\paramspace$; however, in our experiments we found
that random sampling may miss parameter values crucial to obtain high
accuracy. In some sense, grid sampling {\em covers} the parameter
space more evenly leading to better classification accuracy. In our experiments, $4 \leq m \leq 10$ samples is sufficient to get a high accuracy.}. Note
that the grid sampling procedure also checks for validity of a
parameter sample; e.g. if $\tau_1$ and $\tau_2$ are parameters
belonging to the same time-interval $[\tau_1,\tau_2]$, then it imposes
that $\tau_1 < \tau_2$. See Fig.~\ref{fig:grid_sampling} for an
example of grid sampling. Each sample in the parameter space
corresponds to a valuation for the parameters in the \PSTL formula
$\psi$, and applying the $i^{th}$ valuation yields the \STL formula
$\psi_i$ (Line~\ref{algoline:newpsi}). We then use the robustness
value of $\vu(t)$ w.r.t. $\psi_i$ as the $i^{th}$ element of the
feature vector, i.e.  $\featureValues(\vu(t))[i]$. For each trace in
the set $\traces_\train$ and  $\traces_\test$ , we assign it label $1$ if it belongs to
$\traces_\good$, and $0$ otherwise (Line~\ref{algoline:label}). 

In Line~\ref{algoline:dt}, we invoke the decision tree procedure on the
feature vectors and the label sets. The edge between any node in the
decision tree $\Upsilon[\psi]$ and its children is annotated by a
constraint of the form $\rho(\psi_i,\vu,0) < c$ for the left child,
and its negation for the right child. Here, $c$ is some real number.
We give further details on the structure of the tree in the
Section~\ref{sec:dt2stl}. Next, we compute the accuracy of the
decision tree by computing the labels of the traces in the test
set $\traces_\test$ and comparing them to their ground truth labels.
The function $\mathsf{computeAccuracy}$ simply computes the ratio
$|\{\vu(t) \mid \labels(\vu(t)) =
\labels'(\vu(t))\}|\,\slash\,|\traces_\test|$.

\begin{figure}[!t]
\centering
\includegraphics[scale=0.2]{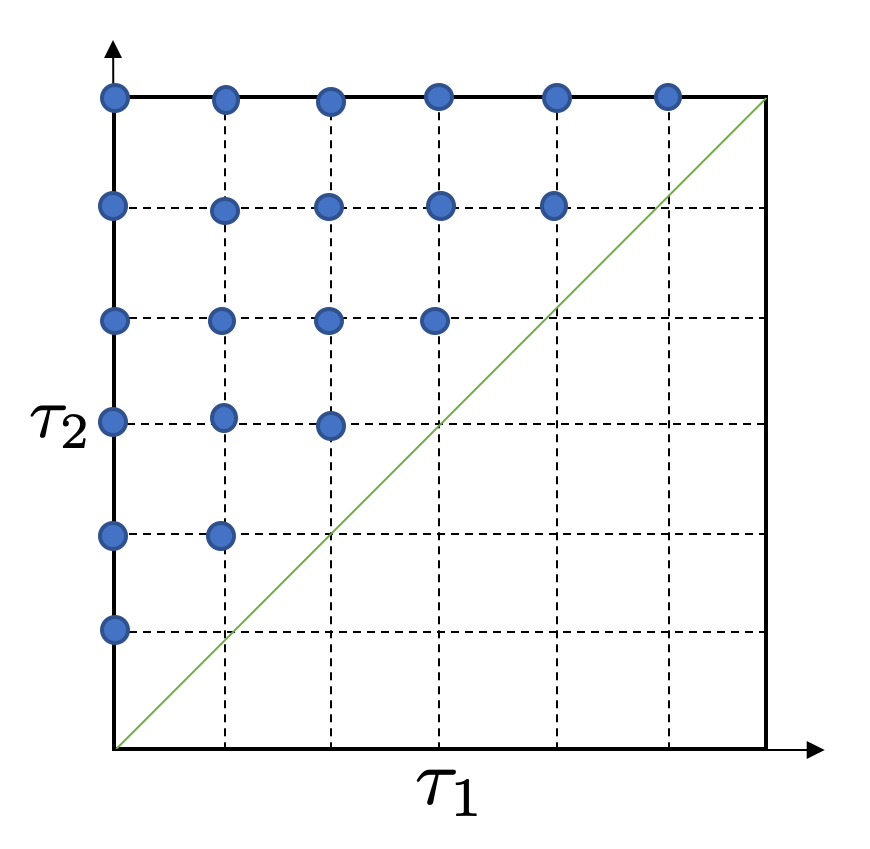}
\caption{Grid sampling of time parameters for formula $\G_{[\tau_1,\tau_2]}(x(t) > c)$. Since $\tau_1$ should be less than $\tau_2$, the area above $\tau_1 = \tau_2$ line (green line) is sampled.}
\label{fig:grid_sampling}
\end{figure}

\section{Extracting Interpretable STL formulas}
\begin{figure}[t]
\centering
\begin{tikzpicture}
\tikzstyle{smalltext}=[font=\fontsize{8}{8}\selectfont]

\node[circle,draw] (root) {$1$};
\node[circle,draw,below left of=root,node distance=20mm] (00) {$2$};
\node[circle,draw,below right of=root,node distance=20mm] (01) {$3$};
\node[circle,draw,below left of=00,node distance=20mm] (000) {$4$};
\node[circle,draw,below right of=00,node distance=20mm] (001) {$5$};
\node[rectangle,draw,fill=green!10,below of=01,node distance=15mm] (l1) {$\ell = 1$}; 
\node[rectangle,draw,fill=green!10,below of=000,node distance=15mm] (l0) {$\ell = 0$}; 
\node[rectangle,draw,fill=green!10,below of=001,node distance=15mm] (l11) {$\ell = 1$}; 

\draw[->] (root) -- node[left,smalltext] {$\rho(\psi_1,\vu,0) < c_1$} (00);
\draw[->] (root) -- node[right,smalltext] {$\rho(\psi_1,\vu,0) \ge c_1$} (01);
\draw[->] (00) -- node[left,smalltext] {$\rho(\psi_2,\vu,0) < c_2$} (000);
\draw[->] (00) -- node[right,pos=0.2,smalltext] {$\rho(\psi_2,\vu,0) \ge c_2$} (001);
\draw[->] (01) -- (l1);
\draw[->] (000) -- (l0);
\draw[->] (001) -- (l11);

\end{tikzpicture}

\caption{Example Tree returned by 
$\mathsf{DecisionTreeBasedSTLClassifier}$\label{fig:dt}}
\end{figure}
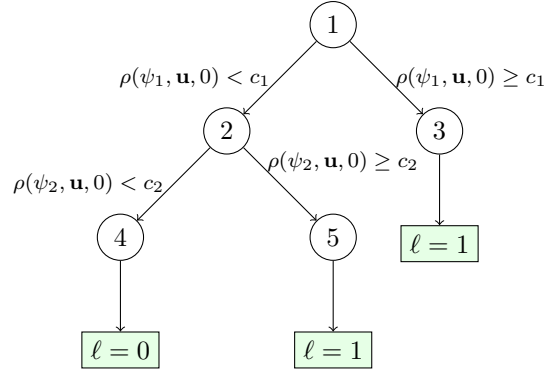

The function $\mathsf{DecisionTreeBasedSTLClassifier}$ described in
Algo.~\ref{alg:logical_classification} returns a decision tree
$\Upsilon[\psi]$ of the form shown in Fig.~\ref{fig:dt}. We note that
the edge labels correspond to inequality tests over \STL formulas
$\psi_i$ corresponding to the same \PSTL formula $\psi$, but with
different valuations for the parameters. Each path from the root of
the tree to a leaf node represents a conjunction of the edge labels,
and the disjunction over all paths leading to the same label
represents the symbolic condition for mapping a given trace to a
given label. Paths leading to label $\ell=1$ correspond to the
environment assumption that we wish to mine. We now show that given a
decision tree of this form, it is always possible to extract an \STL
formula from the symbolic condition that the decision tree represents.

\begin{lemma}
\label{lem:const}
For any \STL formula $\varphi$, any trace $\vu(t)$, and any time
instance $t$, for $\sim \in \{<,\ge,>,\le\}$, any constraint of the
form $\rho(\varphi,\vu,t) \sim c$ can be transformed to the
satisfaction or violation of a formula $\hat{\varphi}$ by $\vu(t)$,
where $\hat{\varphi}$ can be obtained from $\varphi$ and $c$ using
simple transformations (shifts in space parameters).
\end{lemma}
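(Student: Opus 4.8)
The plan is to reduce the comparison $\rho(\varphi,\vu,t)\sim c$ to a comparison against $0$ and then invoke the sign-of-robustness / satisfaction correspondence recalled earlier. Concretely, I would construct $\hat\varphi$ so that the single identity $\rho(\hat\varphi,\vu,t)=\rho(\varphi,\vu,t)-c$ holds for every trace $\vu$ and time $t$. Given this, $\rho(\varphi,\vu,t)\ge c$ iff $\rho(\hat\varphi,\vu,t)\ge 0$, which by the quantitative semantics means $\vu(t)\models\hat\varphi$ (positive robustness implies satisfaction), whereas $\rho(\varphi,\vu,t)\le c$ corresponds to $\vu(t)\nmodels\hat\varphi$. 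The strict variants $>$ and $<$ are handled identically, up to the marginal case $\rho(\hat\varphi,\vu,t)=0$. So the entire lemma reduces to establishing that one shift identity.

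To build $\hat\varphi$ I would prove a slightly stronger, parameterized statement: for every real $d$ there is a transformation $T_d$, acting only on the space constants $\sPara$ of atomic predicates, with $\rho(T_d(\varphi),\vu,t)=\rho(\varphi,\vu,t)+d$ for all $\vu,t$; the lemma then takes $\hat\varphi = T_{-c}(\varphi)$. The argument is structural induction on $\varphi$. In the base case, since $\rho(f(\vu)\ge\sPara,\vu,t)=f(\vu(t))-\sPara$, defining $T_d(f(\vu)\ge\sPara)\eqdef(f(\vu)\ge\sPara-d)$ raises the robustness by exactly $d$. For $\wedge$ (and the derived $\vee$), $\G_\Intvl$, $\F_\Intvl$, and $\U_\Intvl$, the transformation simply descends into the subformulas, and the identity is preserved because $\min$, $\max$, $\inf$, and $\sup$ all commute with adding the constant $d$ to every argument; the constant $\true$ is left fixed since its robustness is $+\infty$.

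The hard part will be negation. A naive uniform shift of all constants in one direction fails, because $\rho(\neg\psi,\vu,t)=-\rho(\psi,\vu,t)$, so raising the robustness of $\psi$ by $d$ \emph{lowers} that of $\neg\psi$ by $d$. The fix is to let negation flip the propagated shift, setting $T_d(\neg\psi)\eqdef\neg\,T_{-d}(\psi)$; then $\rho(T_d(\neg\psi),\vu,t)=-\rho(T_{-d}(\psi),\vu,t)=-(\rho(\psi,\vu,t)-d)=\rho(\neg\psi,\vu,t)+d$, as required. Operationally, this shifts each atomic constant by an amount whose sign depends on the \emph{polarity} of that predicate (the parity of the number of negations above it): positive-polarity constants move one way and negative-polarity constants the other. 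These are exactly the ``shifts in space parameters'' the statement promises, and no time parameter is altered.

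Finally I would assemble the cases of $\sim$: with $\hat\varphi=T_{-c}(\varphi)$, both $\rho(\varphi,\vu,t)\ge c$ and $\rho(\varphi,\vu,t)>c$ become nonnegativity (resp. positivity) of $\rho(\hat\varphi,\vu,t)$, i.e. the satisfaction $\vu(t)\models\hat\varphi$, while $\le$ and $<$ become the violation $\vu(t)\nmodels\hat\varphi$. The only delicacy is the boundary $\rho(\hat\varphi,\vu,t)=0$, where the sign-based satisfaction convention is marginal; this is immaterial for the classifier and can be absorbed by fixing a convention (e.g. treating $\rho\ge 0$ as satisfaction) consistent with the semantics recalled earlier.
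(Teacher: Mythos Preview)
Your argument is correct and follows the same structural induction as the paper's proof: both push the constant $c$ down to the atomic predicates (shifting the space parameter $\sPara$), both flip the sign of the propagated constant at a negation, and both exploit that $\min$, $\max$, $\inf$, $\sup$ commute with adding a constant. The resulting transformation $\hat\varphi$ is identical in both proofs.

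The packaging differs slightly. The paper treats each comparison $\sim\in\{<,\le,>,\ge\}$ separately and, in the inductive step, argues implications of the form ``$\rho(\varphi,\vu,t)>c$ implies $\rho(\hat\varphi,\vu,t)>0$'' case by case for each operator. You instead prove the single stronger identity $\rho(T_d(\varphi),\vu,t)=\rho(\varphi,\vu,t)+d$ once, from which all four comparison cases fall out uniformly by specializing $d=-c$. This buys you a cleaner treatment of the biconditional (the paper's phrasing via one-sided implications leaves the converse direction implicit) and avoids redoing the induction for the ``$<c$'' case, at the cost of having to state the polarity-dependent shift explicitly. Substantively, though, the two proofs coincide.
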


\begin{proof}
We prove the above lemma using structural induction on the syntax of
\STL.  The base case is for atomic predicates.  Suppose $\varphi =
f(\vu) > \sPara$, then if $\rho(\varphi, \vu, t) > c$, by the
definition of a robustness value, $f(\vu(t)) - (\sPara + c) > 0$. Let
$\hat{\varphi} = f(\vu) > (\sPara + c)$.  Then, $\rho(\varphi,\vu,t) >
c$ implies that at time $t$, $\vu(t) \models \hat{\varphi}$. The proof
for atomic predicates indicating other kinds of inequalities is
similar.

The inductive hypothesis is that the above lemma holds for all proper
subformulas of $\varphi$, and in the inductive step we show that if
this is true, then the lemma holds for $\varphi$. 
\noindent (1) Let $\varphi = \neg \psi$. Then, $\rho(\neg\psi,\vu,t) >
c$ implies that $-\rho(\psi,\vu,t) > c$, or $\rho(\psi,\vu,t) \le -c$. 
Let $c' = -c$, then by the inductive hypothesis, there is a formula
$\hat{\psi}$ such that $\rho(\hat{\psi},\vu,t) \le 0$.  \\

\noindent (2) Let $\varphi = \psi_1 \wedge \psi_2$. If $\rho(\psi_1
\wedge \psi_2, \vu, t) > c$, then
$\min(\rho(\psi_1,\vu,t),\rho(\psi_2,\vu,t)) > c$, which implies that
$\rho(\psi_1,\vu,t) > c$ and $\rho(\psi_2,\vu,t) > c$. Again, by the
inductive hypothesis, this implies that there are formulas
$\hat{\psi_1}$  and $\hat{\psi_2}$ such that $\rho(\hat{\psi_1},\vu,t)
> 0$ and $\rho(\hat{\psi_2},\vu,t) > 0$. This implies that
$\min(\rho(\hat{\psi_1},\vu,t),\rho(\hat{\psi_2},\vu,t))>0$, or
$\rho(\hat{\psi_1} \wedge \hat{\psi_2}, \vu, t) > 0$. \\

\noindent (3) Let $\varphi = \psi_1 \vee \psi_2$. An argument similar
to (2) can be used to prove that we can obtain $\hat{\psi_1}$ and
$\hat{\psi_2}$ such that $\rho(\hat{\psi_1}\vee\hat{\psi_2},\vu,t) >
0$. \\

\noindent (4) Let $\varphi$ = $\G_\Intvl \psi$. $\rho(\G_\Intvl \psi,
\vu, t) > c$ implies that $\forall t' \in t \oplus \Intvl$,
$\rho(\psi, \vu, t) > c$. Following similar reasoning as (2), and
using the inductive hypothesis, we can show that there exists an \STL
formula $\hat{\psi}$ such that the above is equivalent to
$\rho(\G_\Intvl \hat{\psi}, \vu, t) > 0$. \\

\noindent (5) For $\varphi$ =  $\F_\Intvl \psi$, and $\varphi = \psi_1
\U_\Intvl \psi_2$ similar reasoning as (4) can be used. We omit the
details for brevity.

Finally, we can have a similar proof for any constraint of the form
$\rho(\varphi,\vu,t) < c$. For example, consider $\varphi = \psi_1
\wedge \psi_2$. $\rho(\varphi,\vu,t) < c$ implies that
$\min(\rho(\psi_1,\vu,t), \rho(\psi_2,\vu,t))< c$, which in turn
implies that $\rho(\psi_1,\vu,t) < c$ or $\rho(\psi_2,\vu,t) < c$. By
the inductive hypothesis we can obtain $\hat{\psi_1}$ and
$\hat{\psi_2}$ such that $\rho(\hat{\psi_1},\vu,t) < 0$ or
$\rho(\hat{\psi_2},\vu,t) < 0$, which implies that
$\rho(\hat{\psi_1}\wedge \hat{\psi_2},\vu,t) < 0$.

As we are able to prove the inductive step for any kind of \STL
operator, and for all types of constraints on the robustness value, by
combining the different cases, we can conclude that the lemma holds
for an arbitrary \STL formula.
\end{proof}

\begin{theorem}
\label{theo1}
Given a decision tree $\Upsilon[\psi]$ where edge labels denote
constraints of the form $\rho(\psi_i,\vu,0) > c_i$, we can obtain an
\STL formula that is satisfied by all input traces that are labeled
$1$ by the decision tree. 
\end{theorem}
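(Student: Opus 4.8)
The plan is to construct the desired STL formula directly from the syntactic structure of the tree, using Lemma~\ref{lem:const} as the workhorse that converts each edge constraint into an (un)satisfaction statement about an STL formula, and then to assemble these across paths via conjunction and disjunction. First I would note that every edge of $\Upsilon[\psi]$ carries a constraint $\rho(\psi_i, \vu, 0) \sim c_i$ with $\sim \in \{<, \ge\}$ (the left and right children in Fig.~\ref{fig:dt}), and that both relations lie among the cases $\{<, \ge, >, \le\}$ covered by Lemma~\ref{lem:const}. Applying that lemma at time $t = 0$ to each edge $e$ yields a transformed STL formula $\hat{\varphi}_e$ (a space-shifted variant of $\psi_i$) such that the edge constraint holds for $\vu(t)$ iff $\vu(t)$ satisfies or violates $\hat{\varphi}_e$; in the ``violation'' case I replace $\hat{\varphi}_e$ by $\neg\hat{\varphi}_e$, so that each edge becomes associated with a single STL formula whose satisfaction is exactly equivalent to traversing that edge. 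This stays inside STL because the logic is closed under negation.

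Next I would fix a root-to-leaf path $P$ terminating at a leaf labeled $1$. A trace is routed along $P$ precisely when it satisfies the edge formula at every edge of $P$, i.e. when $\vu(t) \models \bigwedge_{e \in P} \hat{\varphi}_e$; since STL is closed under conjunction this path formula $\Phi_P \eqdef \bigwedge_{e \in P} \hat{\varphi}_e$ is itself STL. Finally, a trace is labeled $1$ exactly when it follows some accepting path, so the traces labeled $1$ are characterized by $\ireq \eqdef \bigvee_{P\,:\,\mathrm{label}(P)=1} \Phi_P$. Disjunction is expressible in STL (e.g. $\psi_1 \vee \psi_2 = \neg(\neg\psi_1 \wedge \neg\psi_2)$), so $\ireq$ is an STL formula, and by construction every trace labeled $1$ satisfies $\ireq$ --- indeed the two are equivalent, though only one direction is claimed. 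This completes the argument.

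I expect essentially all of the technical content to already live in Lemma~\ref{lem:const}; the remaining obstacle is organizational bookkeeping. The point needing the most care is the direction of the inequalities: I must check that both the strict left-edge tests and the non-strict right-edge tests fall under the lemma, and that whenever the lemma hands back a ``violation of $\hat{\varphi}_e$'' I negate correctly to remain within STL. A secondary check is that every constraint is evaluated at the top-level time $t = 0$, so that the transformations of Lemma~\ref{lem:const} apply uniformly and I need not track a shifting evaluation time down the tree.
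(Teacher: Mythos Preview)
Your proposal is correct and follows essentially the same approach as the paper's own proof: invoke Lemma~\ref{lem:const} to turn each edge constraint into an STL (un)satisfaction statement, conjoin along each root-to-leaf path, and disjoin over the paths ending in label $1$. Your write-up is in fact more explicit than the paper's about the bookkeeping (handling both $<$ and $\ge$ edges, negating when the lemma yields a violation, and fixing $t=0$), but the underlying argument is the same.
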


\begin{proof}
The proof follows from the proof of Lemma~\ref{lem:const}.
Essentially, each constraint corresponding to an edge label can be
transformed into an equivalent \STL formula, and each path is a
conjunction of edge labels; so each path gives us an \STL formula
representing the conjunction of formulas corresponding to each edge
label. Finally, a disjunction over all paths corresponds to a
disjunction over formulas corresponding to each path.
\end{proof}

\begin{remark}

We note that the above procedure does not require the \PSTL formula to
be monotonic.  If the chosen \PSTL formula is monotonic,  then it is
possible to simplify the formula further. Essentially, along any path,
we can retain only those formulas corresponding to parameter
valuations that are incomparable according to the order imposed by
monotony. Furthermore, each of these valuations corresponds to points
on the validity domain boundary as the robustness value for these
valuations is close to zero. We also remark that Lemma~\ref{lem:const}
gives us a constructive approach to build an \STL formula from the
decision tree -- we simply need to follow the recursive rules to push
the constants appearing in the inequalities on the robustness values
to the atomic predicates.

\end{remark}

\section{Benchmarking Supervised Learning}
We divide our evaluation of the techniques presented in this paper
into two parts. In this section, we primarily benchmark the efficacy
of our decision tree based supervised learning approach. In the next
section, we discuss case studies of mining environment assumptions
using a combination of enumerative structure learning of the \PSTL
formula with the decision tree based classification approach. We run the experiments on an Intel Core-i7 Macbook Pro
with 2.7 GHz processors and 16 GB RAM and used decision tree algorithms from Statistics and Machine Learning Toolbox in Matlab with default parameters.

\mypara{Maritime Surveillance} We compare the results of
classification with our tool with the DTL4STL tool
\cite{bombara2016decision}. For a fair comparison, we use the same
data set used by \cite{bombara2016decision}. The maritime surveillance
data set is a 2-dimensional synthetic data set consists of three types
of behaviors: one normal and two anomalous behaviors (see
Fig.~\ref{fig:naval}). 

We applied our tool to 600 traces from this data set (300 traces for
training and 300 traces for testing). The \STL formulas learned by our
technique are as follows:
\begin{align*}
&\varphi_{green} = \neg\varphi_1 \aand \varphi_2\\
&\varphi_{blue} = \neg\varphi_2\\
&\varphi_{red} = \varphi_1 \aand \varphi_2,\\
\end{align*}

where $\varphi_1 = \G_{[15,30]} (x(t) < 39)$ and $\varphi_2 =
\G_{[30,45]} (x(t) < 41.98)$. $\varphi_{green}$ is the formula for
classification of green traces from the others (red and blue traces).
$\varphi_{blue}$ and $\varphi_{red}$ classify blue and red traces from
the others respectively. The train accuracy is $100\%$ and, the test
accuracy is $99\%$ with  training time = 24.82 seconds.The simplest
\STL formula learned by DTL4STL \cite{bombara2016decision} to classify
green traces from the others is:
\begin{align*}
&\varphi = (\varphi_1 \aand (\neg\varphi_2 \vee (\varphi_2 \aand \neg \varphi_3))) \vee (\neg \varphi_1 \aand (\varphi_4 \aand \varphi_5))\\
&\varphi_1=\G_{[199.70,297.27)}(\F_{[0.00,0.05)} (x[t] \leq 23.60)\\
&\varphi_2=\G_{[4.47,16.64)}(\F_{[0.00,198.73)} (y[t] \leq 24.20)\\
&\varphi_3=\G_{[34.40,52.89)}(\F_{[0.00,61.74)} (y[t] \leq 19.62)\\
&\varphi_4=\G_{[30.96,37.88)}(\F_{[0.00,250.37)} (x[t] \leq 36.60)\\
&\varphi_5=\G_{[62.76,253.23)}(\F_{[0.00,41.07)} (y[t] \leq 29.90)\\
\end{align*}
with the average misclassification rate of $0.007$. This \STL formula
is long and complicated compared to the \STL formula $\varphi_{green}$
learned by our framework. Long formulas hinder interpretability and
are not desirable for describing time-series behaviors. The reason
behind generating complicated formulas by DTL4STL
\cite{bombara2016decision} is the restriction to only eventually and
globally as \PSTL templates. Our technique considers the space of all
\PSTL formulas in increasing order of complexity which results in
simple and interpretable \STL classifiers.

\begin{figure}[!t]
\centering
\includegraphics[scale=0.2]{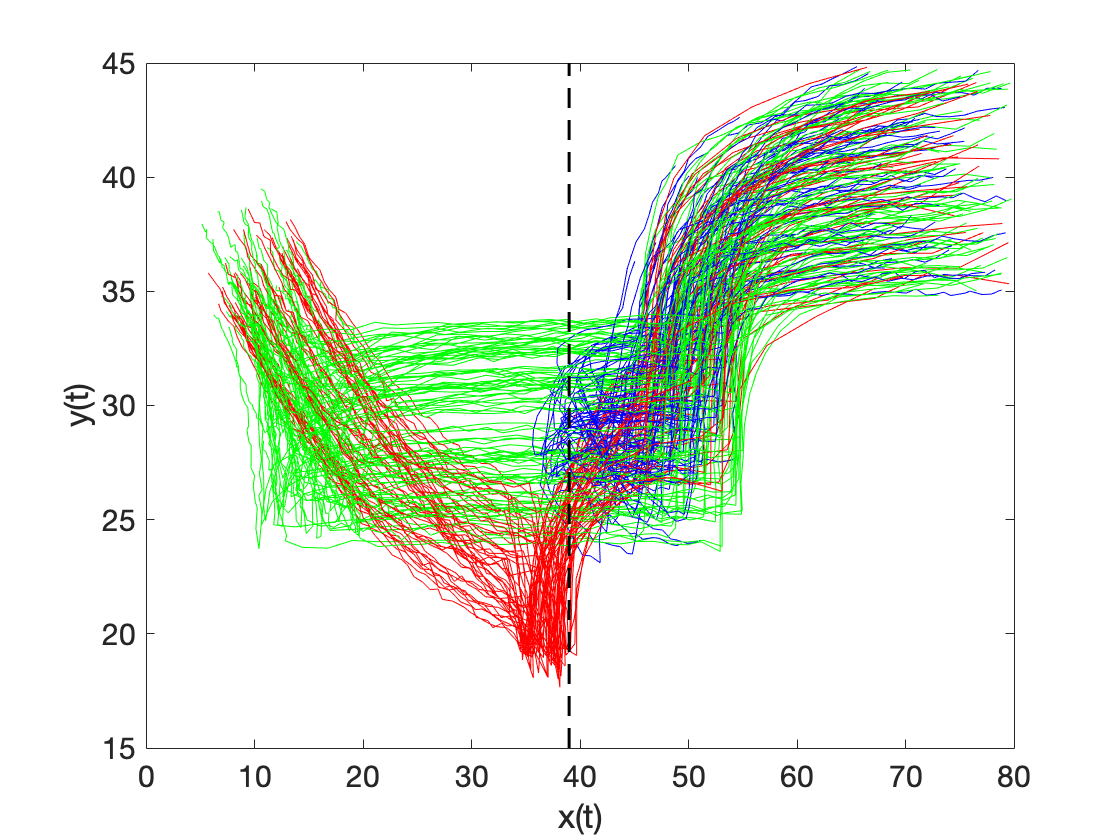}

\caption{Naval surveillance data set \cite{bombara2016decision}(Green
traces: normal trajectories, red and blue traces: two kinds of
anomalous trajectories and the dashed line indicates one of the space parameters learned by our procedure.}
\label{fig:naval}
\end{figure}

\mypara{ECG5days} We also applied our technique on ECG five days data
set from the UCR time-series repository \cite{chen2015ucr}. The data
set consists of echo-cardiogram signals recorded from a 67 year old
male. The two classes correspond to two dates that the ECG was
recorded, which are five days apart. We used 300 traces from this data
set for training and 300 for testing.  The \STL formula learned by our
method to classify two classes of ECG behaviors is:
\begin{align*}
&\varphi = \neg\varphi_1 \wedge (\varphi_2 \vee \neg \varphi_3) \\
&\varphi_1=\G_{[19.28,57.86]}(\F_{[19.28,38.57]} (x[t] > 0.48) \\
&\varphi_2=\G_{[38.57,57.86]}(\F_{[19.28,57.86]} (x[t] > 1.87) \\
&\varphi_3=\G_{[38.57,57.86]}(\F_{[19.28,57.86]} (x[t] > 1.32) \\
\end{align*}
with training accuracy = $99\%$ and testing accuracy = $96\%$. The
required time for training is 1449.34 seconds. 

\begin{figure}[!t]
\centering
\includegraphics[scale=0.2]{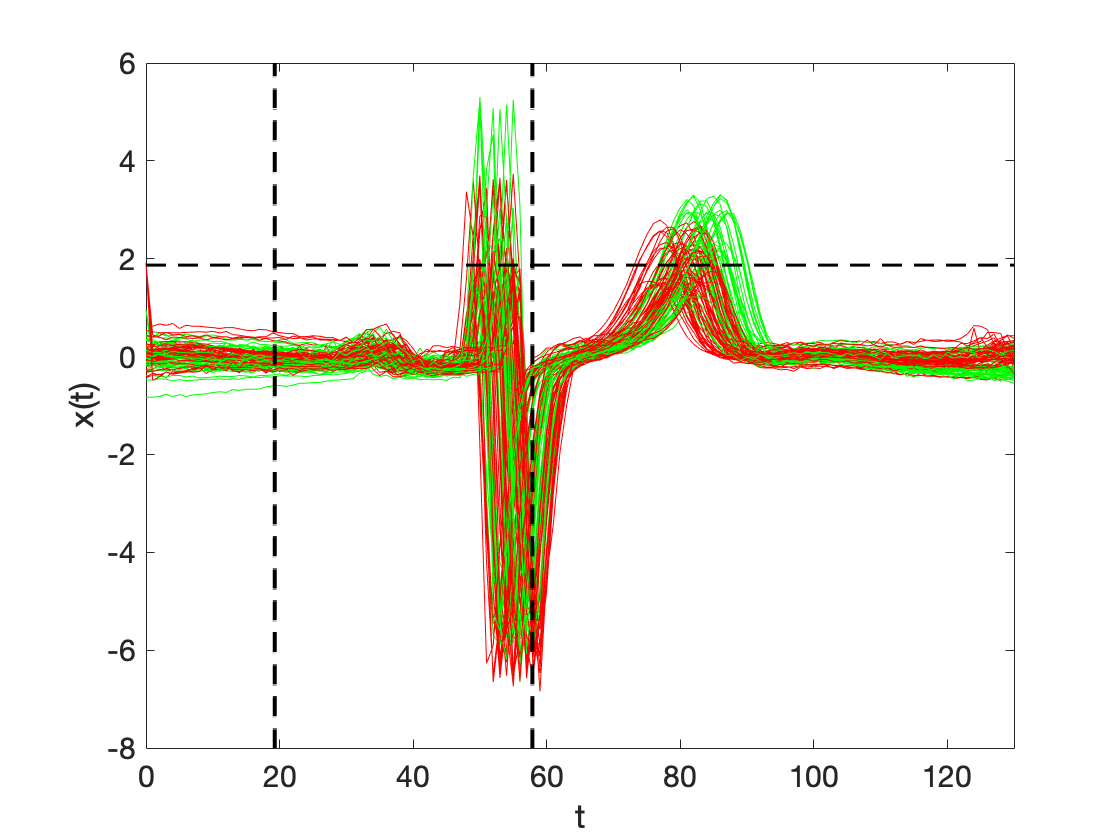}

\caption{ECG five days data set from UCR time-series repository
\cite{chen2015ucr}. The two classes correspond to two dates that the
ECG signals were recorded. Dash lines illustrate the time and space parameters
of the learned \STL formula.}
\label{fig:ecg5days}
\end{figure}

\section{Case Studies}
In this section, we benchmark mining environment assumptions on a few case studies. The first is a synthetic model that we hand-crafted to
demonstrate learning \STL assumptions.  The second is a model of an
automatic transmission system. A model of abstract fuel control is considered as the third case study.

\subsection{Synthetic model}

\simulink is a visual block diagram language commonly used in
industrial settings to model component-based CPS designs. We created a
\simulink model of an oscillator component that has two input signals $u_1$ and $u_2$ and an output
signal $y$. The model has an internal flag that is turned on if within
$3$ seconds of the input value $u_1$ falling below $0$, the input
value $u_2$ also falls below $0$.  When the flag is turned on, the
oscillator outputs a sinusoidal wave with amplitude $5$ units, and
outputs a sine wave with amplitude $1$ unit otherwise.
We imagine a scenario where a downstream component requires the output
of the oscillator component to be bounded by $[-1,1]$. I.e., we
require that the output $y$ satisfies the STL requirement $\oreq =
\alw (-1 \leq y(t) \leq 1)$.  

In this example, we generate a large number of input traces using a
$\simulink$ based signal generator. We pick a small subset of these
traces that includes input traces that both lead to outputs satisfying
$\oreq$ (i.e. the good traces), and violating $\oreq$ (bad traces).
We use our supervised learning framework to learn an STL classifier
$\ireq$.  We then invoke the counterexample-guided refinement step of
Algorithm~\ref{algo:cegis_ea} to improve $\ireq$.  We learned the
environment assumption $\ireq =\alw_{[0,20]} (u_1(t) < 0 \implies
\G_{[0,5]} (u_2(t) \geq 0))$. This means that when input $u_1$ becomes
negative, $u_2$ should stay non-negative within $[0,5]$ seconds.
Otherwise, the output will violate $\oreq$.  The time taken to learn
this $\ireq$ is 6084 seconds  and the training and testing accuracies are $100\%$ respectively. We note that in this case, the learned
formula $\ireq$ is stronger than the theoretical environment
assumption that we had in mind when designing the model. This
discrepancy can be due to the reason that our training set did
not include trajectories where $u_1(t_1) < 0$ and $u_2(t_2) < 0$
occurred when $3 < t_2 - t_1 < 5$.



\mypara{Automatic Transmission Controller} We consider automatic
transmission controller which is a built-in model in \simulink, shown
in Fig.~\ref{fig:auto_trans_simulink} in Appendix. This model consists of modules
to represent the engine, transmission, the vehicle, and a shift logic
block to control the transmission ratio. User inputs to the model are
throttle and brake torque. Engine speed, gear and vehicle speed are
outputs of the system. We are interested in the following signals: the
throttle, the vehicle speed, and the engine speed measured in RPM
(rotations per minute). We wish to mine the environment assumptions on
the throttle that ensures that the engine speed never exceeds 4500 rpm, and
that the vehicle never drives faster than 120 mph. In other words, we
want to mine the \STL specification $\ireq$ on input of the system
(throttle) that results in meeting the following output
requirement:
\[\oreq = \G ( RPM \leq 4500) \aand \G (speed \leq 120). \]

A set of traces that violate this requirement is shown in
Fig.~\ref{fig:auto_trans}. We applied our assumption mining method on
600 throttle traces (300 for training and 300 for testing). The
formula produced by our framework is $\ireq = \G_{[240,480]}(x(t) <
40.4281)$ with training and testing accuracy equal to $100\%$ and
$98\%$ respectively. This formula implies that if the throttle stays
below $40.4281$ in time interval $[240,480]$, the engine and vehicle
speed will meet the requirement. Otherwise, engine or vehicle speed
violate the specifications and go beyond the specified threshold. It
is difficult to mine such behaviors by looking at input and output
traces of the system, and our technique helped in mining such
assumptions on input automatically. The training time for learning the
\STL formula is 28.18 seconds.


\begin{figure}[!t]
\centering
\includegraphics[scale=0.2]{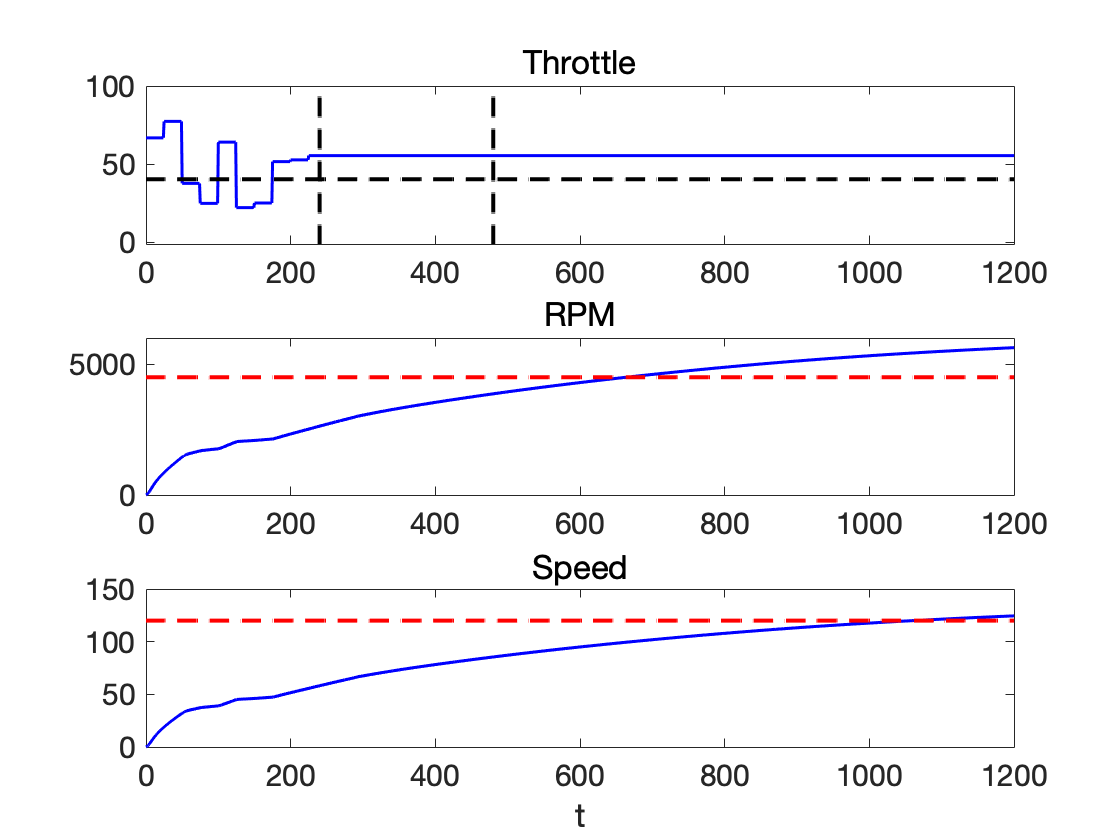}
\caption{Violating traces for the automatic transmission controller.}
\label{fig:auto_trans}
\end{figure}

\mypara{Abstract Fuel Control Model} 
In \cite{jin2014powertrain}, the authors provide a $\simulink$ model for a power train control system. The model takes as input the throttle angle and engine speed, and outputs the Air-to-Fuel ratio. As specified in \cite{jin2014powertrain}, the authors indicate that it is important for the A/F ratio to stay within $10\%$ of the nominal value. The output requirements in \cite{jin2014powertrain} are applicable only in the normal mode of operation of the model, when the throttle angle exceeds a certain threshold, the model switches into the power mode. In this mode, the A/F ratio is allowed to be lower. We wanted to extract the assumptions on the throttle angle that lead to significant excursions from the stoichiometric A/F value. We were able to learn the formula $\alw_{[0,100]} (x(t) < 61.167)$  which confirms that the excursions happen when the model goes into the power mode. We were able to synthesize the environment assumptions in 13.40 seconds with both training and test accuracy accuracy of $100\%$.

\section{Related Work}
\mypara{Learning from timed traces} In the machine learning (ML)
community, various supervised learning methods for timed traces have
been proposed. Traditionally, ML techniques rely on large sets of
generic features (such as those based on statistics on the values
appearing in the timed traces of features obtained through signal
processing).  A key drawback of ML approaches is the lack of
interpretability of the classifiers.

Partially to address interpretability, there have been significant
recent efforts at learning temporal logic formulas from data.  There
is work on learning STL formulas in a supervised learning context
\cite{bombara2016decision,bombara2018online,rPSTL,asarin2011parametric,bartocci2013learning,nenzi2018robust,gol2018efficient,ketenci2019synthesis},
passive learning \cite{Jha2017}, an unsupervised learning context
\cite{jones2014anomaly,vazquez2018time,vazquez2017logical}, and in an
active learning setting \cite{juniwal2014cpsgrader}. Especially
relevant to this paper is the seminal work in
\cite{bombara2016decision}, where the authors propose learning the
structure and parameters of \STL formulas using decision trees.  In
contrast to our technique where there is a single \STL formula used
throughout the decision tree, in \cite{bombara2016decision}, each node
is associated with a primitive \PSTL formula. The technique then makes
use of impurity measures to rank the primitives according to how
accurately they label the set of traces (compared to ground truth).
The primitives come from a fragment of \PSTL containing formulas with
only top-level $\F$, $\G$, $\F\G$ or $\G\F$ operators.  We observe
that the generated \STL formulas in this approach can become long and
complicated, especially because each node in the decision tree can
potentially be a different \STL formula. The decision trees produced
by this method lead to formulas that splice together local deductions
over traces together into a bigger formula.  

\mypara{Requirement Mining} In
\cite{jin2015mining,chen2016active,hoxha2018mining}, the authors
address the problem of mining (output) requirements. Here, they assume
that the structure of the \PSTL formula representing an output
requirement is provided by the user.  The technique then uses
counterexample guided inductive synthesis to infer formula parameters
that result in an \STL formula that is satisfied by all observed model
outputs. Key differences from this method are: (1) we are interested
in mining environment assumptions and not output requirements, (2) we
use a supervised learning procedure that separates input traces that
lead to outputs satisfying/violating an output requirement. The work
in \cite{li-dac10} focuses on mining temporal patterns from traces of
digital circuits, and uses automata-based ideas that are quite
different from the work presented here.

The seminal work proposed by Ferr{\`e}re et al. \cite{ferrere2019interface} extends \STL with support to define
input-output interfaces. A new measure of input vacuity and output
robustness is defined that better reflect the nature of the system and
the specification intent.  Then, the robustness computation is adapted
to exploit the input-output interface and consequently provide deeper
insights into the system behaviors. The connection of this work with
our technique is that we also look at input-output relations using
\STL specifications. Our method mines a \STL formula on input that guarantees a desired requirement on outputs. It would be interesting
to extend the methods developed in this paper to the problem of mining
interface-aware \STL requirements. The latter is more expressive as it
includes predicates that combine input and output variables, which our
current paper does not address.

In \cite{DiwakaranSriram2017Analyzing}, the authors analyze the
falsifying traces of a cyber-physical system. Concretely, they seek to
understand the properties or parts of the inputs to a system model
that results in a counterexample using sensitivity analysis. They use
learning methods (such as statistical hypothesis testing) from
repeated simulations for the system under test. Tornado diagrams are
used to find the values till no violation occurs while SMT solvers are
used to find the falsifying intervals. Our work in this paper can be
used to solve a similar problem, by basically mining environment
assumptions for $\neg \oreq$ for a given output requirement. A key
difference in our technique is that we seek to explain falsifying
input traces using an \STL formula, while the work in
\cite{DiwakaranSriram2017Analyzing} formulates explanations directly
in terms of the input traces.

\section{Conclusions}
In this work, we addressed the problem of
mining environment assumptions for CPS components and representing them
using Signal Temporal Logic. An input trace satisfying an environment
assumption is guaranteed to produce an output that meets the component
requirement. We use a counterexample-guided procedure that
systematically enumerates parametric STL formulas and uses a decision
tree based classification procedure to learn both the structure and
precise numeric constants of an STL formula representing the
environment assumption. We demonstrate our technique on a few benchmark CPS models.

\bibliographystyle{unsrt}  
\bibliography{references}  

\clearpage
\newpage
\section*{Appendix}
\label{sec:appendix}
\begin{figure}[!t]
\centering
\includegraphics[scale=0.1]{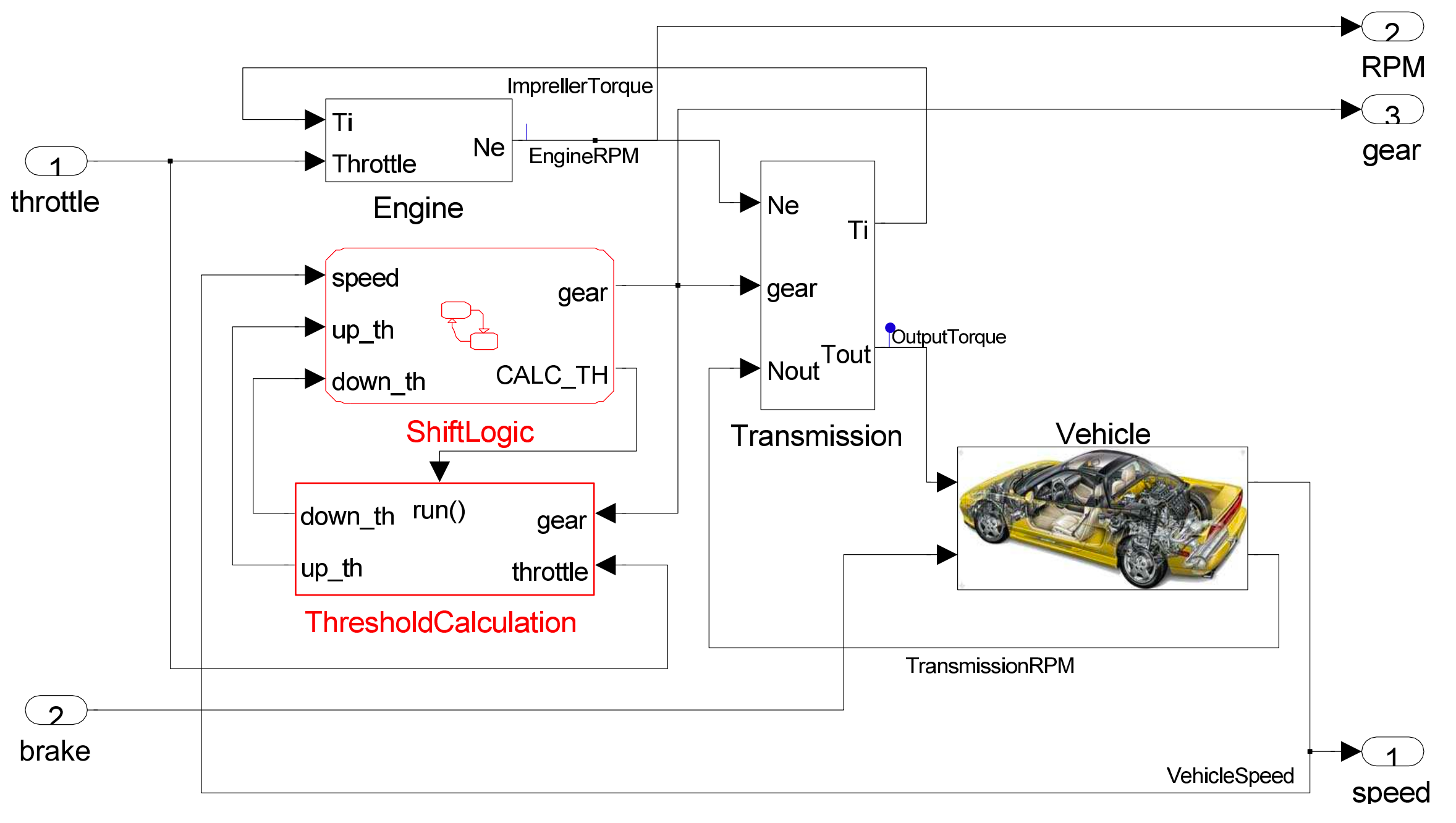}
\caption{The Simulink model of automatic transmission controller. Inputs of the system are throttle and brake. RPM, gear and speed are outputs of the system.}
\label{fig:auto_trans_simulink}
\end{figure}

\mypara{Systematic enumeration \cite{mohammadinejad2019interpretable}} 
From a grammar-based perspective a PSTL formula can be viewed as
atomic formulas combined with unary or binary operators.  For
instance, PSTL formula $ (x(t) > c_1) \Rightarrow \alw_{[0,\tau_2]} (x(t)< c_2)$ consists of binary operator  $ \Rightarrow$, unary operator $\alw$, and atomic predicates $x(t) > c_1$ and
$x(t)< c_2$. Systematic enumeration algorithm consists of the following tasks:
\begin{enumerate}
\item First, basically, all formulas of length $1$, or parameterized signal predicates are enumerated.
\item All enumerated formulas are stored in a database sorted in non-decreasing order of their length.

\item Unary and Binary operators in a user-defined order are applied on all previously enumerated formulas.

\item As the space of all STL formulas is very large,
and contains many semantically equivalent formulas, an optimization technique is proposed to prune the space of formulas considered.
\end{enumerate}

For each PSTL formula $\psi^\candidate$ generated by Systematic enumeration algorithm \cite{mohammadinejad2019interpretable}, we apply the procedure which is formalized in Algo.~\ref{algo:cegis_ea}. If  $\psi^\candidate$ is a good environment assumption (high accuracy), algorithm terminates and $\psi^\candidate$ is
returned. Otherwise, the procedure continues to generate new PSTL
formulas.

\end{document}